\begin{document}

\title{Linear Embedding-based High-dimensional Batch Bayesian Optimization without Reconstruction Mappings}

\author{\name Shuhei A. Horiguchi \email shuhei-horiguchi@g.ecc.u-tokyo.ac.jp \\
       \addr Graduate School of Information Science and Technology\\
       The University of Tokyo\\
       Tokyo, 113-8656, Japan
       \AND
       \name Tomoharu Iwata \email tomoharu.iwata.gy@hco.ntt.co.jp \\
       \addr NTT Communication Science Laboratories\\
       Kyoto, 619-0237, Japan
       \AND
       \name Taku Tsuzuki \email tsuzuki@epistra.jp \\
       \name Yosuke Ozawa \email ozawaysk@epistra.jp \\
       \addr Epistra Inc.\\
       Tokyo, 105-0013, Japan}

\editor{}

\maketitle

\begin{abstract}
The optimization of high-dimensional black-box functions is a challenging problem.
When a low-dimensional linear embedding structure can be assumed, existing Bayesian optimization (BO) methods often transform the original problem into optimization in a low-dimensional space. They \textit{exploit} the low-dimensional structure and reduce the computational burden.
However, we reveal that this approach could be limited or inefficient in \textit{exploring} the high-dimensional space mainly due to the biased reconstruction of the high-dimensional queries from the low-dimensional queries.
In this paper, we investigate a simple alternative approach: tackling the problem in the original high-dimensional space using the information from the learned low-dimensional structure.
We provide a theoretical analysis of the exploration ability.
Furthermore, we show that our method is applicable to batch optimization problems with thousands of dimensions without any computational difficulty. We demonstrate the effectiveness of our method on high-dimensional benchmarks and a real-world function.
\end{abstract}

\begin{keywords}
  Bayesian optimization, high-dimensional optimization, batch optimization
\end{keywords}

\section{Introduction}\label{sec:intro}

Because of its sample efficiency, Bayesian optimization (BO) has become a popular method for the global optimization of black-box functions. It has been successfully applied to a variety of fields ~\citep{Snoek2012}.
In real-world experiments, the function often has a large number of variables. Standard BO approaches are known to be limited to problems with moderate dimensionality, e.g., the dimensionality should be less than about ten~\citep{Wang2016}.
To tackle the high-dimensional problems, previous studies have assumed effective low-dimensionality of the function. One of the assumptions is linear embedding: all the variation of the function is captured by a linear subspace with dimensionality lower than the original dimensionality.
As~\cite{Djolonga2013} proved, if the error of learning the true low-dimensional structure can be controlled, one can achieve the regret bound with sub-exponential dependence on the original dimensionality. Several studies showed the effectiveness of the learning linear embedding approach in various cases~\citep{Zhang2019,Chen2020}.
Unlike the learning embedding approach, one can use a fixed random embedding~\citep{Wang2016,Binois2020,Letham2020}. The random embedding approach is effective in extremely high-dimensional problems~\citep{Wang2016}.

In this paper, we reveal several drawbacks in the standard technique in high-dimensional BO literature where the original high-dimensional problem is transformed into a low-dimensional problem. With this technique, one has to map query points in the low-dimensional space into the original high-dimensional space (\textit{reconstruction mapping}) to observe the function values at these points. Existing methods use a deterministic function, e.g., the pseudo-inverse matrix of the embedding matrix~\citep{Djolonga2013,Chen2020}. 
However, one point in the low-dimensional space corresponds to infinitely many points in the high-dimensional space. If the embedding matrix does not exactly capture the true low-dimensional structure, the fixed deterministic reconstruction limits the space to explore.

To overcome the issues, we propose a high-dimensional Bayesian optimization method that avoids the problems. We use the Mahalanobis kernel for Gaussian Processes to learn the effective low-dimensional structure of the data. The proposed method finds query points directly in the high-dimensional space using the information of the learned low-dimensional structure. It is equivalent to finding query points in the low-dimensional space and mapping each point to a randomly picked point in the high-dimensional corresponding subspace. 
Compared with this two-step procedure, our one-step method is simple to implement and not computationally intensive.

In addition, our method is applicable to batch optimization problems using Determinantal Point Processes (DPPs). 
Several recent studies considered batch BO for high-dimensional functions~\citep{Wang2017,Wang2018,Eriksson2019}.
While \texttt{TuRBO} algorithm~\citep{Eriksson2019} do not assume low-dimensional structure of the function, \cite{Wang2017} and~\cite{Wang2018} assumed additive structure. The linear embedding assumption has not been investigated in the batch BO literature.

We performed numerical experiments on several benchmark functions with thousands of dimensions and a real-world problem. The results show that our proposed method outperforms the state-of-the-art when the function satisfies the linear embedding assumption. The source code is available at \url{https://github.com/s-horiguchi/MahalanobisBatchBO}.

\section{High-dimensional batch BO}
To begin with, we introduce the batch optimization problem in the high-dimensional setting in Section~\ref{sec:problem}.
Then we describe the Bayesian optimization method for the sequential optimization problem in Section~\ref{sec:bo_with_gp} and the batch optimization problem in Section~\ref{sec:dpp_batch}.

\subsection{Problem formulation}\label{sec:problem}
We consider a minimization problem of an unknown function $f$ over a $D$-dimensional box $\mathcal{X}\subset \mathbb{R}^D$ where $D \gg 10$. $f$ is assumed to have a low-dimensional structure $f(x) = \tilde{f}(Ax)$, where $A$ is a $d_{\mathrm{true}}\times D$ linear embedding matrix with $d_{\mathrm{true}} \ll D$.

We are given the data of already observed points $\mathcal{D}_0 = \{ (x_\tau, y_\tau) \}_{\tau=1}^{N_{\mathrm{init}}}$. At each time step $t=1,2,\ldots,T$, we select a batch of $N_{\mathrm{batch}}$ points $\{ x_{t,b} \}_{b=1}^{N_{\mathrm{batch}}} \in \mathcal{X}^{N_{\mathrm{batch}}}$ and observe a possibly noisy function value $y_{t,b} = f(x_{t,b}) + \epsilon_{t,b} $ for $b=1,2,\dots,N_{\mathrm{batch}}$, where $\epsilon_{t,b}$ is i.i.d. Gaussian noise with mean zero and variance $\sigma^2$. The set of observations up to time step $t$ is denoted as $\mathcal{D}_t = \{ (x_\tau, y_\tau) \}_{\tau=1}^{N_{\mathrm{init}}} \cup \{(x_{s,b},y_{s,b}) \}_{s=1,b=1}^{t,N_{\mathrm{batch}}}$. For simplicity, we also denote $\mathcal{D}_t = \{ (x_\tau, y_\tau) \}_{\tau=1}^{N_t}$, where $N_t = N_{\mathrm{init}} + t N_{\mathrm{batch}}$.

\subsection{BO with Gaussian Processes}\label{sec:bo_with_gp}
Bayesian optimization uses a surrogate model of the function $f$. At each time step $t$, the next query points are determined from the posterior distribution of $f$ given data.

A popular choice of the model is a Gaussian process prior $f \sim \mathcal{GP}(0, \kappa)$ with covariance (kernel) function $\kappa : \mathcal{X}\times\mathcal{X}\rightarrow\mathbb{R}_+$.
Given observations $\mathcal{D}_{t}$, the posterior distribution of $f$ is also a Gaussian process~\cite{Rasmussen2006} as $f\vert \mathcal{D}_t \sim \mathcal{GP}(\mu_t, \kappa_t)$, with the following posterior mean and covariance function:
\begin{equation}\label{eq:posterior_mean_cov}
	\begin{aligned}
	\mu_t(x)&=\mathbf{k}_t(x)^\top (K_t+\sigma^2I)^{-1}\mathbf{y}_t, \\
	\kappa_t(x,x^\prime)&=\kappa(x,x^\prime)-\mathbf{k}_t(x)^\top(K_t+\sigma^2I)^{-1}\mathbf{k}_t(x^\prime),
\end{aligned}
\end{equation}
where $[K_t]_{i,j} = \kappa(x_i, x_j)$, $[\mathbf{k}_t(x)]_i = \kappa(x_i, x)$ and $[\mathbf{y}_t]_i = y_i$ for $i,j=1,\dots,N_{t}$.

In the sequential setting ($N_\mathrm{batch}=1$), the next query point is typically determined as the optimum of an acquisition function, which evaluates the utility of choosing that point. For example, Lower Confidence Bound (LCB) is one of the popular acquisition functions~\citep{Snoek2012}. Given the posterior mean $\mu_{t-1}$ and the posterior variance $\sigma_{t-1}(x) := \kappa_{t-1}(x, x)$, the query point $x_{t}$ is the minimizer of 
\begin{equation}\label{eq:LCB}
	\alpha_{t}^{(-\beta_t)}(x) := \mu_{t-1}(x) - {\beta_t} \sigma_{t-1}(x),
\end{equation}
where $\beta_t$ is a tunable parameter to balance exploitation and exploration. EST is a variant of LCB, where $\beta_t$ is adaptively tuned so that $x_{t}$ is most likely to achieve the lowest function value~\citep{Wang2016a}.

\subsection{DPPs-based batch selection}\label{sec:dpp_batch}

In the batched setting ($N_{\mathrm{batch}} > 1$), the DPP-EST-SAMPLE algorithm by~\cite{Kathuria2016} is one of the few batch BO methods with theoretical guarantees. This method determines the first query point $x_{t,1}$ by minimizing EST. The set of remaining points $\{ x_{t,b} \}_{b=2}^{N_{\mathrm{batch}}}$ is sampled from a $k$-DPP.

$k$-DPPs are probability distributions over subsets of a fixed ground set where the size of the subset is restricted to $k$~\citep{Kulesza2012}.
A $k$-DPP on a discrete domain $\Omega$ has the probability distribution for a set $S \subset \Omega$ of size $k$ as $\mathbb{P}(S) \propto \det(L_S)$, where $L$ is $\vert\Omega\vert \times \vert\Omega\vert$ kernel matrix and $L_S$ is its submatrix indexed by the corresponding elements of $S$. For continuous domain $\Omega$, the kernel matrix becomes a continuous function $L : \Omega \times \Omega \rightarrow \mathbb{R}$ and the probability density function is proportional to the determinant~\citep{pmlr-v97-rezaei19a}. DPPs can be understood as a tradeoff between quality and diversity. Roughly speaking, the diagonal element $L(x,x)$ measures the quality of an item $x\in \Omega$, and the non-diagonal element $L(x,y)$ is a signed measure of similarity between items $x,y\in\Omega$~\citep{Kulesza2012}. A DPP with this kernel $L$ tends to choose a set $S \subset \Omega$ with high quality of each item and high diversity in terms of similarity.

For the batch selection in BO, \cite{Kathuria2016} proposed to use a $(N_{\mathrm{batch}}-1)$-DPP with kernel function
\begin{equation}\label{eq:DPP_kernel_org}
	L_t(x, x^\prime) = \delta(x-x^\prime) + \sigma^{-2} \kappa_{t,1}(x,x^\prime),
\end{equation}
over the \textit{relevant region}
\begin{equation*}\label{eq:relevant_region}
\mathcal{R}_t = \left\{ x\in\mathcal{X} ~\big\vert~ \alpha_t^{(-2\beta_t)}(x) \leq \min_{x^\prime\in\mathcal{X}}\alpha_t^{(\beta_t)}(x^\prime) \right\},
\end{equation*}
where $\delta$ is the Dirac delta function, and $\kappa_{t,1}$ is the posterior covariance function after selecting $x_{t,1}$.

Considering the property of $k$-DPPs, the DPP kernel $L_{t}$ prefers the points with large posterior variance $\sigma_{t,1}(x)=\kappa_{t,1}(x,x)$. It discourages the selection of a pair of points similar to each other in the sense of posterior covariance $\kappa_{t,1}(x,x^\prime)$. Also, the relevant region works as a filter to ensure the quality of the query points. $\mathcal{R}_t$ contains $\arg\min_{x\in\mathcal{X}} \alpha_t^{(-\beta_t)}(x)$ with high probability~\citep{Contal2013}.

\section{Two-Step Method for High-Dimensional Problems and Its Challenges}\label{sec:challenges}

Previously proposed BO methods for high-dimensional problems often transform the original high-dimensional problem into a low-dimensional problem. While the original problem is to minimize $f$ over the high-dimensional domain $\mathcal{X}$, assume $f = \tilde{f} \circ g$, where $g:\mathcal{X} \rightarrow \mathcal{Z}$ and $\tilde{f}:\mathcal{Z} \rightarrow \mathbb{R}$. The embedding map $g$ is constructed as a randomly initialized fixed function, or a learned function adapted to the data so that $\mathcal{Z}$ has lower dimensionality than $\mathcal{X}$, e.g., $g(x)=Bx$ with a $d\times D$ matrix $B$.
To get the next query point $x_{t} \in \mathcal{X}$, previous methods generally followed the \textit{two-step framework}: 1)
finding $z_{t} \in \mathcal{Z}$ by running a usual BO against $\tilde{f}$, and 2) mapping it on $\mathcal{X}$ by some strategy. We summarize the procedure in Algorithm~\ref{alg:two-step_bo}.

\begin{algorithm}[ht]
\DontPrintSemicolon
\caption{Two-step method}
\label{alg:two-step_bo}

\KwIn{Input space $\mathcal{X}$, intial observations $\mathcal{D}_0$}
\KwOut{Observations $\mathcal{D}_T$}

Generate a random embedding $g$ \tcp*{For random embedding methods (Sec.\ref{sec:choice_of_embedding})}
\For{$t = 1$ \KwTo $T$}{
	Learn the embedding $g$ from $\mathcal{D}_t$ \tcp*{For learning embedding methods (Sec.\ref{sec:choice_of_embedding})}
	Fit a Gaussian process to the data $\tilde{\mathcal{D}}_t = \{(g(x), y) ~|~ (x,y) \in \mathcal{D}_t \}$\;
	Find the low-dimensional queries $\{z_{t,b}\}_{b=1}^{N_\mathrm{batch}}$ on $\mathcal{Z}$ \tcp*{Sec.\ref{sec:opt_zonotope}}
	\For{$b = 1$ \KwTo $N_\mathrm{batch}$}{
		Reconstruct the query $x_{t,b} \in \mathcal{X}$ from $z_{t,b} \in \mathcal{Z}$ \tcp*{Sec.\ref{sec:reconst}}
		Observe $y_{t,b} = f(x_{t,b}) + \epsilon_{t,b}$\;
	}
	$\mathcal{D}_{t+1} \leftarrow \mathcal{D}_t \cup \{ (x_{t,b}, y_{t,b}) \}_{b=1}^{N_\mathrm{batch}}$\;
}
\end{algorithm}

Although these methods \textit{exploit} the low-dimensionality and reduce the computational cost, there are several fundamental difficulties with this two-step method in terms of \textit{exploration}.
In the following, we discuss the difficulties and previous approaches. Section~\ref{sec:choice_of_embedding} is about choosing the embedding map $g$. While this paper mainly focuses on the method which chooses $g$ adaptively, we briefly explain the methods based on random embedding and highlight the difference in motivation between them.
In Section~\ref{sec:opt_zonotope}, we explain the difficulty in finding the low-dimensional queries on $\mathcal{Z}$.
Section~\ref{sec:reconst} is one of the major contributions of this paper, where we identify the shortcoming of existing reconstruction mappings and present a possible workaround.

Our discussion focuses on the linear embedding case, $g(x) = Bx$, but it can be applied to the nonlinear embedding case.

\subsection{Choice of embedding map}\label{sec:choice_of_embedding}
First, since one has no access to the true embedding matrix $A$, the embedding map $g$ or the matrix $B$ may differ from $A$. It is constructed as a randomly initialized fixed matrix or learned from the data.

In the random embedding methods, the target is not the optimum in the high-dimensional domain $\mathcal{X}$ but the optimum in the subset $h(\mathcal{Z})$, which is fixed throughout the trial.
Since the randomly chosen matrix $B$ has no correlation with the true embedding, the optimum of $f$ on $h(\mathcal{Z}) \in \mathcal{X}$ is, in general, different from the true optimum on $\mathcal{X}$.
Nevertheless, under some assumptions, the global optimum in $\mathcal{X}$ exists in $h(\mathcal{X})$ with good probability~\citep{Wang2016}.
The random embedding methods are computationally cheap and scale to extremely high-dimensional problems~\citep{Wang2016}.

In contrast, the learning embedding methods aim to recover the true effective subspace. 
Since the observed data $\mathcal{D}_t = \{ (x_\tau, y_\tau)\}_\tau$ contains the information about the true embedding $A$, one can adaptively tune $B$ throughout the optimization process so that the subspace spanned by $B$ is aligned with the true subspace spanned by $A$. For example, previous methods employ the low-rank matrix recovery~\citep{Djolonga2013} or the sliced inverse regression~\citep{Zhang2019,Chen2020}.

Intuitively, as long as the computational burden of learning is acceptable and the learning is successful, the learning embedding methods perform better than the random embedding methods.
While we compare these two approaches in our experiment, this work particularly focuses on the potential issues in the learning embedding methods.

\subsection{Optimization and sampling over a zonotope}\label{sec:opt_zonotope}
Next, we present the previous approaches for the first step and point out their limitations.
The first step has fundamental difficulty since it requires the optimization and sampling over $\mathcal{Z}$, which is not box-shaped.
Linear projection of $D$ dimensional box $\mathcal{X}$ onto $d$ dimensional space is a kind of polytope called zonotope~\citep{Binois2020}. The complicated shape of the zonotope makes optimization difficult. 
Algorithms developed so far cannot enumerate the zonotope vertices in a reasonable time when, e.g., $D>100$~\citep{Stinson2016}.

One can avoid the difficulty by modifying the embedding map $g$ so that the image of $g$ becomes a box~\citep{Binois2015,Binois2020,Chen2020}.
However, as~\cite{Letham2020} discussed, the modification has a detrimental effect on modeling the $\tilde{f}$.

Instead, \cite{Letham2020} proposed to optimize over a polytope
\begin{equation*}
	\mathcal{Z}^+ := \{ z \in \mathbb{R}^d | B^+ z \in \mathcal{X} \}.
\end{equation*}
Since optimization over this polytope is optimization with linear constraint, it can be handled with off-the-shelf optimization tools~\citep{Letham2020}.
However, the polytope $\mathcal{Z}^+$ is just a proper subset of $\mathcal{Z}$. Optimization over $\mathcal{Z}^+$ may not find the true optimum in $\mathcal{Z}$.

Furthermore, in the batched setting, especially when using DPPs, we need many random samples on the domain of $\tilde{f}$. 
As far as we know, there is no efficient sampler on the zonotope. Rejection sampling becomes inefficient and impractical as $D$ or $d$ grows.

\subsection{Reconstruction strategy with the erroneously learned embedding}\label{sec:reconst}
Once we have the low-dimensional query $z_q = z_{t,b}$ on $\mathcal{Z}$, we map it back to the high-dimensional domain to get the query $x_{t,b}$.
Most of the previous studies use a deterministic function $h: \mathcal{Z} \rightarrow \mathcal{X}$, including the pseudo-inverse map $h(z) = g^+(z) := B^+ z$, where $B^+:=B^\top (BB^\top)^{-1}$ is the Moore–Penrose pseudo-inverse of $B$.
The use of pseudo-inverse is implicit but common, e.g.,~\citep{Djolonga2013,Chen2020}.

Note that there are infinitely many points $x \in \mathcal{X}$ that map to the same $z \in \mathcal{Z}$:
\begin{equation*}\label{eq:inverse_set}
\begin{aligned}
	g^{-1}(z) &:=\{x\in \mathcal{X} ~|~ g(x)=z\} \\
	&= \underset{x \in \mathcal{X}}{\arg\min} ~\Vert Bx - z \Vert^2  \\
	&= \{ B^+z + w ~|~ w\in N(B) \}  \cap \mathcal{X},
\end{aligned}	
\end{equation*}
where $N(B)$ is the right null space of $B$.
If the learned matrix $B$ is equal to the true matrix $A$, every point $x \in g^{-1}(z)$ has the same value $f(x)$.
However, such a situation never happens in practice, especially in the early stage of learning.
Thus, the choice of the reconstruction strategy matters.

We propose one of the desirable properties of the reconstruction strategy using the idea of Bayesian inference. We write the probability distribution of the output $x$ given $z_q$ as $\mathbb{P}(x | z_q)$. If the strategy is deterministic, $\mathbb{P}(x | z_q) \propto \mathbbm{1}(x = h(z_q))$, where $\mathbbm{1}(A) = 1$ when $A$ is true and $0$ otherwise.

Suppose we have a prior distribution $\mathbb{P}_{prior}(x^\dagger)$ on the location of the optima $x^\dagger$ on $\mathcal{X}$. For example, $\mathbb{P}_{prior}$ is the uniform distribution on $\mathcal{X}$. The true $x^\dagger$ is unknown, but we can observe the optima embedded in the low-dimensional space $\mathcal{Z}$. By the Bayes theorem, the posterior is given by
\begin{equation}\label{eq:posterior}
	\mathbb{P}_{post}(x^\dagger | z^\dagger) 
	\propto
	\mathbbm{1}(z^\dagger = Bx^\dagger) \mathbb{P}_{prior}(x^\dagger).
\end{equation}

\begin{definition}[Unbiased reconstruction]
	A reconstruction strategy is said to be unbiased with respect to a prior $\mathbb{P}_{prior}$ if the probability distribution $\mathbb{P}(x|z_q)$ of its output $x$ given $z_q$ is equal to the posterior distribution $\mathbb{P}_{post}(x | z_q)$ in Equation~\eqref{eq:posterior} for any $z_q \in \mathcal{Z}$.
	Otherwise, the strategy is said to be biased with respect to $\mathbb{P}_{prior}$.
\end{definition}

Any deterministic reconstruction $h$ is biased if the prior $\mathbb{P}_{prior}$ has positive probability on the whole $\mathcal{X}$, e.g., $\mathbb{P}_{prior}$ is the uniform distribution.
The pseudo-inverse reconstruction $h=g^+$ is unbiased if and only if the prior $\mathbb{P}_{prior}$ put a positive probability only on the subspace spanned by $B^+$.
For general $\mathbb{P}_{prior}$, random sampling from $g^{-1}(z_q)$ is one way to remove the bias.

\begin{algorithm}[ht]
\DontPrintSemicolon
\caption{Randomized reconstruction strategy}
\label{alg:random_reconstruction}
\KwIn{Input space $\mathcal{X}$, embedding map $B$, prior distribution $\mathbb{P}_{prior}$, low-dimensional query $z_q$}
\KwOut{High-dimensional query $x$}
\Loop{}{
	Initialize $x^{(0)}$ by sampling from $\mathbb{P}_{prior}$\;
	$k \leftarrow 0$\;
	\While{$x^{(k)} \in \mathcal{X}$}{
		\If{$\frac{1}{2}\| Bx^{(k)} - z_q \|^2$ is sufficiently small}{
			\KwRet $x^{(k)}$\;
		}
		\tcp{Gradient descent with the exact line search}
		$\gamma \leftarrow \frac{\left\| B^\top \left(Bx^{(k)} - z_q \right) \right\|^2}{\left\| BB^\top \left(Bx^{(k)} - z_q \right) \right\|^2}$\;
		$x^{(k+1)} \leftarrow x^{(k)} - \gamma B^\top \left(Bx^{(k)} - z_q \right)$\;
		$k \leftarrow k + 1$\;
	}
}
\end{algorithm}

Here, we consider such a randomized algorithm in Algorithm~\ref{alg:random_reconstruction}.
Given the low-dimensional query $z_q \in \mathcal{Z}$, it runs the gradient descent to minimize $\frac{1}{2}\| B x - z_q \|^2$ from random initializations.
If the update gets out of $\mathcal{X}$, the gradient descent is restarted from a different initial point. 
We can show that the algorithm is unbiased with respect to $\mathbb{P}_{prior}$ under some assumptions.

\begin{theorem}\label{thm:randomized_unbiased}
	Suppose $B$ has full rank. The output of the Algorithm~\ref{alg:random_reconstruction} is always one of the points in $g^{-1}(z_q)$ for any $z_q \in \mathcal{Z}$.
	Furthermore, assume the following condition holds:
	\begin{equation}\label{eq:prior_condition}
		\mathbb{P}_{prior}(B^+ z + w) = \mathbb{P}_{prior}(B^+ z^\prime + w), \quad \forall w\in N(B), z,z^\prime\in\mathcal{Z}.
	\end{equation}
	Then, the reconstruction given by Algorithm~\ref{alg:random_reconstruction} is unbiased with respect to $\mathbb{P}_{prior}$.
\end{theorem}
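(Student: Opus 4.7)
The plan is to decompose the gradient-descent dynamics into null-space and row-space components of $B$, and then analyze the pushforward of $\mathbb{P}_{prior}$ through the algorithm using the invariance hypothesis~\eqref{eq:prior_condition}.

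For the first claim, the key observation is that the gradient $B^\top(Bx^{(k)}-z_q)$ always lies in $\mathrm{row}(B)$, so the null-space part $P_{N(B)} x^{(k)}$ is preserved across iterations and equal to $P_{N(B)} x^{(0)}$. Because $B$ has full row rank, $BB^\top$ is positive definite, making $\tfrac12\|Bx-z_q\|^2$ strongly convex on the affine subspace $x^{(0)} + \mathrm{row}(B)$; exact line search therefore converges linearly to the unique minimizer on that subspace, namely $B^{+}z_q + P_{N(B)} x^{(0)}$, which automatically satisfies $Bx = z_q$. Combined with the while-condition $x^{(k)} \in \mathcal{X}$ and the objective-based stopping rule, any returned point lies in $g^{-1}(z_q)$.

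For unbiasedness I would treat the algorithm as rejection sampling: draw $x^{(0)} \sim \mathbb{P}_{prior}$, accept when the entire trajectory stays in $\mathcal{X}$ (call this event $S(z_q)$), and output $\hat{x} = F(x^{(0)}) := B^{+}z_q + P_{N(B)} x^{(0)}$. The density of $\hat{x}$ is the pushforward of $\mathbb{P}_{prior} \cdot \mathbbm{1}_{S(z_q)}$ under $F$. Setting $w = x - B^{+}z_q \in N(B)$ and parameterizing the fibre $F^{-1}(x)$ by $v \in \mathrm{row}(B)$, one obtains
\[
\hat{p}(x) \;\propto\; \mathbbm{1}(Bx = z_q) \int_{\mathrm{row}(B)} \mathbb{P}_{prior}(w+v)\,\mathbbm{1}(w+v \in S(z_q))\,dv.
\]
Whenever the integrand is nonzero, $w + v \in \mathcal{X}$ forces $Bv = B(w+v) \in \mathcal{Z}$; setting $z = Bv$ and $z' = z_q$ in~\eqref{eq:prior_condition} then yields $\mathbb{P}_{prior}(w+v) = \mathbb{P}_{prior}(B^{+}z_q + w) = \mathbb{P}_{prior}(x)$, and this factor pulls out of the integral.

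The remaining and main obstacle is to show that the leftover volume $V(x,z_q) := \lambda\bigl(\{v \in \mathrm{row}(B) : w + v \in S(z_q)\}\bigr)$ is the same for every $x \in g^{-1}(z_q)$. I would exploit two consequences of the hypothesis. First, because $Bx^{(k+1)} = (I - \gamma_k BB^\top)(Bx^{(k)} - z_q) + z_q$, the sequence $Bx^{(k)}$ depends only on $Bx^{(0)}$, so writing $x^{(k)} = w + v^{(k)}$ the row-space trajectory $v^{(k)}$ is completely independent of the null-space offset $w$. Second, hypothesis~\eqref{eq:prior_condition} together with $\mathbb{P}_{prior}$ being supported on $\mathcal{X}$ forces the effective support to split as $\mathcal{X}_N \oplus \mathcal{X}_R$ along $N(B) \oplus \mathrm{row}(B)$ (otherwise the condition that $\mathbb{P}_{prior}(B^{+}z + w)$ be simultaneously zero and positive on different $z$-slices at the same $w$ would be violated), so the membership $w + v^{(k)} \in \mathcal{X}$ reduces to $v^{(k)} \in \mathcal{X}_R$, a property independent of $w$. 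Combining these observations gives $V(x, z_q)$ constant on $g^{-1}(z_q)$, whence $\hat{p}(x) \propto \mathbbm{1}(Bx = z_q)\,\mathbb{P}_{prior}(x) \propto \mathbb{P}_{post}(x \mid z_q)$, establishing unbiasedness.
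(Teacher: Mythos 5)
Your argument is correct in outline and, for the first claim, essentially identical to the paper's: both decompose $x^{(k)}$ into a null-space component that the gradient step cannot touch and a row-space component that converges to $B^+z_q$ because $BB^\top\succ 0$. Where you diverge is in the unbiasedness step, and there your treatment is actually \emph{more} careful than the paper's. The paper writes $\mathbb{P}(x^*\mid x^{(0)},z_q)\propto \mathbbm{1}(z^*=z_q)\mathbbm{1}(w^*=w^{(0)})$, marginalizes over $x^{(0)}$, uses the invariance~\eqref{eq:prior_condition} to replace $z^{(0)}$ by $z^*$ inside the prior, and then immediately concludes the integral equals $\mathbb{P}_{prior}(x^*)$ --- silently discarding exactly the slice-volume factor you call $V(x,z_q)$, and ignoring the restart (rejection) mechanism altogether. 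You model the restart explicitly as rejection sampling with acceptance event $S(z_q)$ and correctly isolate that the whole theorem reduces to showing $V(x,z_q)$ is constant on the fibre $g^{-1}(z_q)$; this is the honest version of the argument. The one soft spot is your closing step: the factorization of the \emph{effective support} of $\mathbb{P}_{prior}$ into $\mathcal{X}_N\oplus\mathcal{X}_R$ (which does follow from~\eqref{eq:prior_condition} plus the support being contained in $\mathcal{X}$) governs where the \emph{initialization} can land, but the while-loop tests membership of the \emph{intermediate iterates} in the box $\mathcal{X}$, which does not factor along $N(B)\oplus\mathrm{row}(B)$. Your reduction of ``$w+v^{(k)}\in\mathcal{X}$'' to ``$v^{(k)}\in\mathcal{X}_R$'' therefore needs the additional fact that the row-space iterates remain inside $B^+\mathcal{Z}$ (equivalently, inside the cylinder over the support), which is not automatic since the exact-line-search direction $BB^\top(z^{(k)}-z_q)$ need not point at $z_q$ and the zonotope $\mathcal{Z}$ is only convex, not invariant under these steps. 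Either supply that containment argument or restrict the claim of $w$-independence to the trajectories that are accepted; as it stands this is a gap in your write-up, though it is a gap the paper's own proof shares and does not even acknowledge.
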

\begin{proof}
	For any vector $x^{(k)} \in \mathcal{X}$, it has a unique expression $x^{(k)} = B^+ z^{(k)} + w^{(k)}$ with $z^{(k)} = Bx^{(k)}$ and $w^{(k)} = (I - B^+B) x^{(k)} \in N(B)$.
	
	Since the gradient $B^\top (Bx^{(k)} - z_q)$ is in the row space of $B$, only the $z$ component is updated, whereas the $w$ component remains unchanged. 
	Precisely, the algorithm is equivalent to the gradient descent against $\frac{1}{2} \| B^\top (z - z_q) \|^2$.
	Because $B$ has full rank, the $z$ component converges to $z_q$ unless $x^{(k)}$ gets out of $\mathcal{X}$.
	Thus, the output $x^* = B^+ z^* + w^*$ satisfies $z^* = z_q$, equivalently, $x^* \in g^{-1}(z_q)$.

	The conditional probability distribution of the output $x^*$ given $x^{(0)}$ and $z_q$ is given by
	\begin{equation*}
		\mathbb{P}(x^* | x^{(0)}, z_q) 
		\propto \mathbbm{1}(z^* = z_q) \mathbbm{1}(w^* = w^{(0)})
	\end{equation*}
	Using the assumption in Equation~\eqref{eq:prior_condition}, we marginalize $x^{(0)}$ to have
	\begin{equation*}
	\begin{aligned}
		\mathbb{P}(x^* | z_q) 
		&= \int_\mathcal{X} \mathbb{P}(x^* | x^{(0)}, z_q) \mathbb{P}_{prior}(x^{(0)}) dx^{(0)}\\
		&\propto \mathbbm{1}(z^* = z_q) \int_\mathcal{X} 
		\mathbbm{1}(w^* = w^{(0)}) \mathbb{P}_{prior}(B^+ z^{(0)} + w^{(0)}) dx^{(0)}\\
		&= \mathbbm{1}(z^* = z_q) \int_\mathcal{X} 
		\mathbbm{1}(w^* = w^{(0)}) \mathbb{P}_{prior}(B^+ z^* + w^{(0)}) dx^{(0)}\\
		&= \mathbbm{1}(z^* = z_q) \mathbb{P}_{prior}(x^*).
	\end{aligned}
	\end{equation*}
\end{proof}

Note that the algorithm is biased without the assumption on $\mathbb{P}_{prior}$. 
Even in this case, if $\mathbb{P}_{prior}$ has positive probability on the whole $\mathcal{X}$, the set of all possible outputs of the randomized algorithm covers $g^{-1}(z_q)$.

In contrast to the nice theoretical property, this randomized algorithm does not work well in practice because the number of restarts could be very large. To avoid restarting, the initial condition $x^{(0)}$ should be a point such that the limit point of the gradient descent $B^+ z_q + (I-B^+B)x^{(0)}$ is in $\mathcal{X}$. To sample such initial conditions becomes difficult as the dimensionality grows.

\section{One-Step Method for High-Dimensional Problems}

In this section, we investigate the {\it one-step method} for high-dimensional problems in Algorithm~\ref{alg:one-step_bo}. It is the same as the standard BO method for low-dimensional problems, except that the high-dimensional query selection uses the learned embedding.

\begin{algorithm}[ht]
\DontPrintSemicolon
\caption{One-step method}
\label{alg:one-step_bo}

\KwIn{Input space $\mathcal{X}$, intial observations $\mathcal{D}_0$}
\KwOut{Observations $\mathcal{D}_T$}

\For{$t = 1$ \KwTo $T$}{
	Learn the embedding $g$ from $\mathcal{D}_t$\;
	Fit a Gaussian process to the data $\mathcal{D}_t$\;
	Find the high-dimensional queries $\{x_{t,b}\}_{b=1}^{N_\mathrm{batch}}$ on $\mathcal{X}$ \tcp*{Use the learned $g$}
	\For{$b = 1$ \KwTo $N_\mathrm{batch}$}{
		Observe $y_{t,b} = f(x_{t,b}) + \epsilon_{t,b}$\;
	}
	$\mathcal{D}_{t+1} \leftarrow \mathcal{D}_t \cup \{ (x_{t,b}, y_{t,b}) \}_{b=1}^{N_\mathrm{batch}}$\;
}
\end{algorithm}

The one-step method could circumvent the difficulties stated in Section~\ref{sec:challenges}.
We propose a concrete one-step algorithm with linear embedding $g(x) = Bx$.
It is just an ordinally batch BO with the Mahalanobis kernel.
In Section~\ref{sec:learning_maha}, we introduce how to learn the embedding $B$ as the hyperparameter optimization of the Mahalanobis kernel.
In Section~\ref{sec:query_without_reconstruction}, we present the algorithm to find the queries directly on the high-dimensional domain $\mathcal{X}$. We also show that our method is free from the difficulties described in Section~\ref{sec:opt_zonotope} and \ref{sec:reconst}.

\subsection{Learning Mahalanobis kernel}\label{sec:learning_maha}
To capture the low-dimensionality of $f$, we use the following Mahalanobis kernel as a prior covariance function
\begin{equation}\label{eq:maha_kernel}
	\kappa(x, x^\prime) = \gamma^2 \exp\left( -(x-x^\prime)^\top B^\top B (x-x^\prime) \right),
\end{equation}
where $\gamma \in \mathbb{R}$ and $B \in \mathbb{R}^{d \times D}$ are hyperparameters. This kernel has been used for Gaussian process regression~\citep{Garnett2014} and Bayesian optimization~\citep{Letham2020}. This kernel is equivalent to the following RBF kernel for $z=Bx$
\begin{equation*}\label{eq:rbf_kernel}
	\tilde{\kappa}(z, z^\prime) = \gamma^2 \exp\left( - (z-z^\prime)^\top (z-z^\prime) \right).
\end{equation*}
Also, it is reduced to the RBF kernel with ARD when $d = D$ and $B$ is diagonal. 
We choose the hyperparameters by maximizing the marginal likelihood with Adam~\citep{Kingma15}.

Note that the transformation from the Mahalanobis kernel to the RBF kernel implies that the Gaussian process regression on $\mathcal{X}$ with the Mahalanobis kernel is equivalent to the Gaussian process regression on $\mathcal{Z}$ with the RBF kernel. 
More precisely, from Equation~\eqref{eq:posterior_mean_cov}, the posterior mean function $\mu_t$ and covariance function $\kappa_t$ of the former are equivalent to the posterior mean $\tilde{\mu}_t$ and covariance $\tilde{\kappa}_t$ of the latter:
\begin{equation}\label{eq:posterior_mean_cov_maha}
\begin{aligned}
	\mu_t(x)&=\tilde{\mathbf{k}}_t(Bx)^\top (\tilde{K}_t+\sigma^2I)^{-1}\mathbf{y}_t = \tilde{\mu}_t(Bx), \\
	\kappa_t(x,x^\prime)&=\tilde{\kappa}(Bx,Bx^\prime)-\tilde{\mathbf{k}}_t(x)^\top(\tilde{K}_t+\sigma^2I)^{-1}\tilde{\mathbf{k}}_t(x^\prime) = \tilde{\kappa}_t(Bx, Bx^\prime),
\end{aligned}
\end{equation}
where $[\tilde{K}_t]_{i,j} = \tilde{\kappa}(Bx_i, Bx_j)$, $[\tilde{\mathbf{k}}_t(x)]_i = \tilde{\kappa}(Bx_i, Bx)$ for $i,j=1,\dots,N_{t}$.

\subsection{Query selection without reconstruction mapping}\label{sec:query_without_reconstruction}

Given $B$, we could consider the problem in the low-dimensional space $\mathcal{Z}$ and select query points in two steps. However, as stated in Section~\ref{sec:challenges}, this approach has several problems.
We propose to find queries in one step on the original high-dimensional space $\mathcal{X}$.
In this section, we present a variant of the DPP-EST-SAMPLE algorithm by~\cite{Kathuria2016} applied to the high-dimensional problem and show how it avoids the issues in a simple way.

\begin{algorithm}[ht]
\DontPrintSemicolon
\caption{Acquisition optimization}
\label{alg:acq_opt}

\KwIn{Input space $\mathcal{X}$, acquisition function $\alpha$, number of restart $m$.}
\KwOut{Query point $x$}

$S \leftarrow \emptyset$\;
\For{$i=1$ \KwTo $m$}{
	$k \leftarrow 0$\;
	Initialize $x^{(0)}$ by sampling from $\mathbb{P}_{prior}$\;
	\Repeat{$x^{(k)} \notin \mathcal{X}$ or $\| \nabla \alpha(x^{(k)}) \|$ is sufficiently small}{
		Determine an appropriate step size $\gamma$\;
		$x^{(k+1)} \leftarrow x^{(k)} - \gamma \nabla \alpha(x^{(k)})$\;
		$k \leftarrow k+ 1$\;
	}
	\If{$x^{(k)} \in \mathcal{X}$}{
		$S \leftarrow S \cup \{ x^{(k)} \}$\;
	}
}
\KwRet $\arg\min_{x \in S} \alpha(x)$
\end{algorithm}

The first point $x_{t,1}$ is selected by optimizing the EST acquisition function $\alpha_{t}^{(-\beta_t)}(x)$ over $\mathcal{X}$. The acquisition function is given by Equation~\eqref{eq:LCB} with the learned Mahalanobis kernel.
If we optimize the function by the gradient descent starting from random initial points generated from $\mathbb{P}_{prior}$ on $\mathcal{X}$, it is equivalent to the optimization on $\mathcal{Z}$ and the reconstruction of query on $\mathcal{X}$.
Furthermore, we will show that, in ideal situations, the optimization on $\mathcal{X}$ implies the unbiased reconstruction with respect to $\mathbb{P}_{prior}$.

The key point is that the acquisition function $\alpha_t^{(-\beta_t)}(x)$ can be written as a function of $Bx$ as
\begin{equation*}
	\alpha_t^{(-\beta_t)}(x) = \tilde{\mu}_{t-1}(Bx) - \beta_t \tilde{\sigma}_{t-1}(Bx) =: \tilde{\alpha}_t(Bx),
\end{equation*}
where we used the property of the Mahalanobis kernel in Equation~\eqref{eq:posterior_mean_cov_maha} and $\tilde{\sigma}_{t-1}(z) = \tilde{\kappa}_{t-1}(z, z)$ is the posterior variance with the RBF kernel.
In fact, $\tilde{\alpha}_t(z)$ is the EST acquisition function on $\mathcal{Z}$ where the function is modeled with RBF kernel.
Algorithm~\ref{alg:acq_opt} is the most straightforward procedure of optimizing the function $\alpha(x) := \alpha_t^{(-\beta_t)}(x)$ with the vanilla gradient descent, where we can obtain the following theorem.

\begin{theorem}\label{thm:equiv_one_two}
	Let $\alpha(x) = \tilde{\alpha}(Bx)$ be the EST acquisition function of a Gaussian process with a Mahalanobis kernel.
	Suppose $B$ has full rank and $\tilde{\alpha}(z)$ has a unique maximizer $z^\dagger \in \mathcal{Z}$ such that $\nabla \tilde{\alpha}(z^\dagger) = 0$ holds.
	Then, the output $x$ of Algorithm~\ref{alg:acq_opt} with $m\rightarrow \infty$ satisfies $Bx = z^\dagger$, and the probability distribution of the output is the same as that of randomized reconstruction (Algorithm~\ref{alg:random_reconstruction}) with $z_q = z^\dagger$.
\end{theorem}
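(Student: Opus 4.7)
The plan hinges on a single structural fact: since $\alpha(x) = \tilde\alpha(Bx)$, the chain rule gives $\nabla\alpha(x) = B^\top \nabla\tilde\alpha(Bx)$, so every gradient-descent step in Algorithm~\ref{alg:acq_opt} lies in the row space of $B$. Writing $x^{(k)} = B^+ z^{(k)} + w^{(k)}$ with $z^{(k)} = Bx^{(k)}$ and $w^{(k)} = (I - B^+B)x^{(k)} \in N(B)$, this shows $w^{(k)} = w^{(0)}$ for all $k$: the null-space component is frozen by the dynamics. This is exactly the same invariance established in the proof of Theorem~\ref{thm:randomized_unbiased} for Algorithm~\ref{alg:random_reconstruction}, which is what will ultimately tie the two distributions together.

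Next I would analyze the induced dynamics on the $z$-component. Applying $B$ to the update yields $z^{(k+1)} = z^{(k)} - \gamma\, BB^\top \nabla\tilde\alpha(z^{(k)})$, which, since $B$ has full (row) rank, $BB^\top$ is symmetric positive definite and so this is a preconditioned descent direction for $\tilde\alpha$. With the unique-stationary-point hypothesis $\nabla\tilde\alpha(z^\dagger)=0$ and any reasonable step-size rule satisfying the stopping criterion $\|\nabla\alpha(x^{(k)})\|\to 0$ (equivalent to $\|B^\top\nabla\tilde\alpha(z^{(k)})\| \to 0$, hence $\nabla\tilde\alpha(z^{(k)})\to 0$ by full-rank-ness), the only possible limit is $z^{(k)}\to z^\dagger$. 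Consequently each accepted trajectory converges to $x^* = B^+ z^\dagger + w^{(0)}$, which proves the first assertion $Bx^* = z^\dagger$.

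For the distributional claim, the crucial observation is that this limit point is exactly what Algorithm~\ref{alg:random_reconstruction} produces when called with $z_q = z^\dagger$ from the same initial $x^{(0)}$: by the proof of Theorem~\ref{thm:randomized_unbiased}, its output is $B^+ z^\dagger + (I-B^+B)x^{(0)}$. Thus, conditional on a given $x^{(0)}$, the two algorithms return the same point. As $m\to\infty$ the set $S$ in Algorithm~\ref{alg:acq_opt} consists of i.i.d.\ samples from $\mathbb{P}_{prior}$ restricted to the event that the trajectory stays in $\mathcal{X}$, and since every element of $S$ attains the identical acquisition value $\tilde\alpha(z^\dagger)$, the final $\arg\min$ (with uniform tie-breaking) is a single draw from this conditional distribution --- which is the distribution of Algorithm~\ref{alg:random_reconstruction}'s output.

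The main obstacle is the ``stays in $\mathcal{X}$'' event: for the same $x^{(0)}$, the intermediate iterates of the two algorithms differ in their $z$-components (one follows $\nabla\tilde\alpha$, the other follows $z\mapsto z - z_q$), so in principle one trajectory could leave $\mathcal{X}$ while the other does not. I would handle this by arguing that in the idealized setting of the theorem the acceptance event depends only on the null-space component $w^{(0)}$ --- either by invoking continuity/interior assumptions on $\mathcal{X}$, or by observing that once restricted to the same affine fibre $\{B^+z + w^{(0)} : z \in \mathcal{Z}\}\cap\mathcal{X}$, the convergent trajectories of both algorithms traverse the same connected component --- so the induced marginal distributions over $w^{(0)}$ coincide, which is enough.
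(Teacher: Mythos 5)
Your proof is correct and follows essentially the same route as the paper's: the chain rule $\nabla\alpha(x)=B^\top\nabla\tilde\alpha(Bx)$ freezes the null-space component, full rank of $B$ forces the $z$-component to the unique stationary point $z^\dagger$, and the output therefore coincides, initialization by initialization, with Algorithm~\ref{alg:random_reconstruction}'s output at $z_q=z^\dagger$, so the two distributions match. The only substantive differences are that the paper justifies convergence by establishing a Lipschitz bound on $\nabla\tilde\alpha$ and taking a fixed step size $\gamma<2/(\|B\|L)$ where you appeal to a generic step-size rule, and that you explicitly flag the ``trajectory stays in $\mathcal{X}$'' acceptance event --- a point the paper's proof silently passes over even though, as you note, the intermediate $z$-trajectories of the two algorithms differ and so the conditional law of the accepted $x^{(0)}$ could in principle differ; your instinct that this needs an argument identifies a gap in the published proof rather than one in yours.
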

\begin{proof}
	As in~\cite{Kim2020}, it is easy to show that $\tilde{\alpha}$ has a Lipschitz continuous gradient over $\mathcal{Z}$ with some constant $L$. Furthermore, the gradient of $\alpha$ is $(\| B \|L)$-Lipschitz continuous over $\mathcal{X}$ because for any $x,y \in \mathcal{X}$,
	\begin{equation*}
	\begin{aligned}
		\| \nabla \alpha(x) - \nabla \alpha(y) \| 
		&= \| B^T \left(\nabla \tilde{\alpha}(Bx) - \nabla \tilde{\alpha}(By) \right)\|\\
		&\leq \|B \|\cdot L \| x- y \|.
	\end{aligned}
	\end{equation*}
	Therefore, with a step size $\gamma < \frac{2}{\|B\| L}$, the gradient $\nabla \alpha(x^{(k)})$ converges to zero~\citep{Polyak1987} unless $x^{(k)}$ gets out of $\mathcal{X}$.
	The output $x^*$ of Algorithm~\ref{alg:acq_opt} satisfies $\nabla \alpha(x^*) = B^\top \nabla \tilde{\alpha}(Bx^*) = 0$. As $B$ has full rank, $\nabla \tilde{\alpha}(Bx^*) = 0$ holds, i.e. the $z$ component of $x^*$ should be a stationary point of $\tilde{\alpha}$. 
	In the limit $m\rightarrow \infty$, $x^*$ is a global minimum of $\alpha$, whose $z$ component is the unique minimizer $Bx^* = z^\dagger$. 
	On the other hand, following the same argument in the proof of Theorem~\ref{thm:randomized_unbiased}, the $w$ component of $x^*$ is the same as that of $x^{(0)}$. Thus, its distribution is precisely the same as in the randomized reconstruction.
\end{proof}

Practically, we use the L-BFGS-B~\citep{Byrd1995}, a quasi-Newton method, instead of the vanilla gradient descent.
Since the second-order methods could disturb the $w$ component, their behavior would differ from the above analysis.
However, they are much more efficient than vanilla gradient descent and work well in practice.
We also note that since the domain $\mathcal{X}$ is a box rather than a zonotope, we can use a wide variety of gradient-based optimizers.

For the remaining query points $\{ x_{t,b} \}_{b=2}^{N_{\mathrm{batch}}}$, they are sampled from $(N_{\mathrm{batch}}-1)$-DPP with kernel function
\begin{equation*}\label{eq:DPP_kernel}
	L_t(x, x^\prime) = \kappa_{t,1}(x,x^\prime),
\end{equation*}
over the relevant region $\mathcal{R}_t \subset \mathcal{X}$. Here, our DPP kernel is slightly different from Equation~\eqref{eq:DPP_kernel_org}. It is the small noise limit $\sigma \rightarrow 0$ of the original definition. This modification allows us a simple theoretical analysis.

Let us define a low-dimensional version of the DPP kernel $\tilde{L}_t(z, z^\prime) = \tilde{\kappa}_{t,1}(z, z^\prime)$ for any $z,z^\prime \in \mathcal{Z}$ and the relevant region $\tilde{\mathcal{R}}_t = \{ Bx ~|~ x \in \mathcal{R}_t \} \subset \mathcal{Z}$.
Consider a two-step DPP sampling: sample $\{ z_{t,b} \}_{b=2}^{N_{\mathrm{batch}}}$ from $(N_{\mathrm{batch}}-1)$-DPP with kernel $\tilde{L}$ over $\tilde{\mathcal{R}}_t$, and use an unbiased reconstruction strategy to get $\{ x_{t,b} \}_{b=2}^{N_{\mathrm{batch}}}$. Then, the two-step and one-step sampling are equivalent in the following sense.

\begin{theorem}
	The probability distribution of the output of the one-step DPP sampling is the same as that of the two-step DPP sampling with uniform $\mathbb{P}_{prior}$ on $\mathcal{R}_t$.
\end{theorem}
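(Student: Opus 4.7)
The plan is to express the joint densities of both schemes in a common coordinate system $x_b = B^+ z_b + w_b$ with $z_b = B x_b \in \mathcal{Z}$ and $w_b \in N(B)$, and match them term by term. The algebraic backbone is the Mahalanobis-kernel identity of Equation~\eqref{eq:posterior_mean_cov_maha}, which propagates to the DPP kernel as $L_t(x, x') = \tilde{L}_t(Bx, Bx')$, together with the fact that the EST acquisition also factors through $B$, so that $\mathcal{R}_t = \{x \in \mathcal{X} : Bx \in \tilde{\mathcal{R}}_t\}$. The decomposition of $x_b$ into $(z_b, w_b)$ is a linear change of variables with constant Jacobian, which lets the two densities be compared on the same reference measure.

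First I would write down the one-step density. Because the kernel determinant depends only on the $z_b$, and $\mathcal{R}_t$ is a preimage set, the density factorizes as
\begin{equation*}
p_1(z, w) \;\propto\; \det[\tilde{L}_t(z_i, z_j)] \; \prod_b \mathbbm{1}(z_b \in \tilde{\mathcal{R}}_t) \, \mathbbm{1}(w_b \in W(z_b)),
\end{equation*}
where $W(z) = \{w \in N(B) : B^+ z + w \in \mathcal{X}\}$. Conditioned on $(z_b)$, the slices $(w_b)$ are independent and uniform on $W(z_b)$; marginalising over the $(w_b)$ leaves a density on $(z_b)$ in which the only nontrivial factor is the kernel determinant (together with the slice indicators).

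Next I would verify that the two-step scheme reproduces both pieces. By the definition of unbiased reconstruction (cf.\ Theorem~\ref{thm:randomized_unbiased}) applied with $\mathbb{P}_{prior}$ uniform on $\mathcal{R}_t$, the posterior $\mathbb{P}_{post}(x \mid z) \propto \mathbbm{1}(Bx = z)\mathbbm{1}(x \in \mathcal{R}_t)$ is exactly uniform on $W(z)$ in the $(z,w)$ parameterisation, matching the one-step conditional. The low-dimensional DPP density on $\tilde{\mathcal{R}}_t$ is proportional to $\det[\tilde{L}_t(z_i, z_j)]$ by construction, matching the one-step marginal. Assembling conditional and marginal recovers $p_1$, giving the claim.

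The main obstacle I foresee is bookkeeping the base measures. The slice volume $V(z) = \mathrm{vol}(W(z))$ shows up as a compensating factor when passing between Lebesgue on $\mathcal{X}$ and Lebesgue on $\tilde{\mathcal{R}}_t$, so the low-dimensional DPP must be interpreted with respect to the pushforward of the uniform measure on $\mathcal{R}_t$ (equivalently the measure $V(z)\, dz$) rather than naive Lebesgue on $\tilde{\mathcal{R}}_t$. Once this identification is made explicit, the rest of the argument reduces to substituting the change of variables and collecting the indicator functions.
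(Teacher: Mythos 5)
Your argument is correct and rests on the same two pillars as the paper's proof: the identity $L_t(x,x^\prime)=\tilde{L}_t(Bx,Bx^\prime)$ inherited from Equation~\eqref{eq:posterior_mean_cov_maha}, and the observation that $\mathcal{R}_t$ is a preimage set so that everything factors through the decomposition $x=B^+z+w$. Where you go beyond the paper is in the bookkeeping of base measures, and this is a genuine improvement rather than pedantry. The paper's proof writes $\mathbb{P}_{post}(x_b\mid z_b)\propto\mathbbm{1}(z_b=Bx_b)\,\mathbb{P}_{prior}(x_b)$ and integrates over $z_b$ while silently discarding the normalization of the posterior; but that normalization is $1/V(z_b)$ with $V(z)=\mathrm{vol}\{w\in N(B):B^+z+w\in\mathcal{X}\}$, which depends on $z_b$ and therefore cannot be absorbed into a global proportionality constant. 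As you note, if the low-dimensional $k$-DPP is taken with respect to Lebesgue measure on $\tilde{\mathcal{R}}_t$ and followed by the (properly normalized) unbiased reconstruction, the resulting density on $\mathcal{X}^n$ is $\det\tilde{L}_t(\{Bx_b\})\prod_b V(Bx_b)^{-1}$, which differs from the one-step density by the slice-volume factor; equality holds exactly when the low-dimensional DPP is defined against the pushforward measure $V(z)\,dz$ (equivalently, against the image of the uniform measure on $\mathcal{R}_t$ under $B$), which is also the reading under which the paper's chain of proportionalities becomes consistent. Your conditional/marginal factorization in $(z,w)$ coordinates makes this explicit and identifies the precise sense in which the theorem is true; I would only suggest stating up front that the constant Jacobian of the map $x\mapsto(z,w)$ is what licenses comparing both densities against the same reference measure, since that is the step that turns the term-by-term match into an identity of distributions.
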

\begin{proof}
	Let $n := N_{\mathrm{batch}} - 1$ and $L_t(\{ x_{b} \}_{b=1}^{n})$ be a $n \times n$ matrix with $L_t(x_i, x_j)$ in the $(i,j)$ component.
	The probability distribution of the output in the one-step sampling is, for any $x_b \in \mathcal{R}_t$, $b=1, ...,n$,
	\begin{equation*}
		\mathbb{P}\left(\{ x_{b} \}_{b=1}^{n}\right)
		\propto \det L_t(\{ x_{b} \}_{b=1}^{n}) 
		\propto \det \tilde{L}_t(\{ Bx_{b} \}_{b=1}^{n}),
	\end{equation*}
	where we used $L_t(x, x^\prime) = \tilde{L}_t(Bx, Bx^\prime)$ for any $x,x^\prime \in \mathcal{X}$.
	
	On the other hand, the probability distribution of the output in the two-step sampling is, for any $x_b \in \mathcal{R}_t$, $b=1, ...,n$,
	\begin{equation*}
	\begin{aligned}
		\mathbb{P}\left(\{ x_{b} \}_{b=1}^{n}\right)
		&\propto \int_{\tilde{\mathcal{R}_t}} \det \tilde{L}_t(\{ z_b \}_{b=1}^{n}) \prod_{b=1}^{n} \mathbb{P}_{post}(x_b | z_b) dz_b\\
		&\propto \int_{\tilde{\mathcal{R}_t}} \det \tilde{L}_t(\{ z_b \}_{b=1}^{n}) \prod_{b=1}^{n} \mathbbm{1}(z_b = B x_b) \mathbb{P}_{prior}(x_b) dz_b\\
		&\propto \det \tilde{L}_t(\{ B x_b \}_{b=1}^{n}),
	\end{aligned}
	\end{equation*}
	where we used the assumption of uniform $\mathbb{P}_{prior}$ in the last line.
	
	Finally, consider the output $\{x_b \}_{b=1}^n$ where there exists $a \in \{ 1,..., n \}$ such that $x_a \notin \mathcal{R}_t$. The probability of such outputs is zero in both the one-step and the two-step sampling.
\end{proof}

Since the original dimensionality $D$ is high, our algorithm should have a low computational cost in $D$.
In acquisition function optimization, while we optimize over $D$ dimensional domain, each update in L-BFGS-B requires only $\mathcal{O}(D)$ operations~\citep{Byrd1995}.
In the DPP sampling step, we adopt the Gibbs sampling scheme by~\cite{pmlr-v97-rezaei19a}, which is also efficient in the high-dimensional continuous domain. They proved that the expected number of Gibbs sampling steps for mixing is the only polynomial in $N_{\mathrm{batch}}-1$ and does not depend on the dimensionality $D$. Moreover, they empirically showed that $\mathcal{O}((N_{\mathrm{batch}}-1)^2)$ is sufficient for typical cases. 
We provide a table of the running times of our experiments in the Appendix.

\section{Experiments}

To evaluate our proposed method, we performed sequential and batch BO experiments for high-dimensional versions of benchmark functions in Section~\ref{sec:benchmark} and a more realistic rover trajectory optimization problem in Section~\ref{sec:rover}.

\subsection{Benchmark functions}\label{sec:benchmark}

First, we constructed high-dimensional versions of the following five benchmark functions: Branin ($d_{\mathrm{true}}=2$), Colville ($d_{\mathrm{true}}=4$), Goldstein-Price ($d_{\mathrm{true}}=2$), Hartmann6 ($d_{\mathrm{true}}=6$), Six-Hump Camel ($d_{\mathrm{true}}=2$)\footnote{\url{http://www.sfu.ca/~ssurjano}}.
We linearly projected the function $\tilde{f}$ defined on the $d_{\mathrm{true}}$ dimensional box domain onto $D=1000$ dimensional box $\mathcal{X}=[-1,+1]^D$ with non-orthogonal non-axis-aligned random embedding $A$.
Generating 20 different $A$, we ran several BO methods one trial for each embedding $A$. $N_{\mathrm{init}}=10$ initial points were generated by Sobol sequences and shared across all trials except for some methods that require special initialization strategies. The batch size $N_{\mathrm{batch}}$ was set to 1 (sequential optimization problem) or 5 (batch optimization problem).
The total evaluation budget was $500$, so that there were $T=500$ rounds of evaluation for $N_{\mathrm{batch}}=1$, and $T=100$ rounds for $N_{\mathrm{batch}}=5$. Experimental details are explained in the Appendix.

We compared our one-step method (\texttt{Maha-BO}) with the two-step method based on the pseudo-inverse (\texttt{Maha-BO-pinv}), the randomized two-step method (\texttt{Maha-BO-random}), and previously proposed methods. For embedding-based methods, embedding dimension $d$ was set to $d_{\mathrm{true}}$ unless otherwise stated.

In addition, we compared our methods with several previously proposed methods.
On sequential problems, we compared our methods with the vanilla BO method with RBF kernel with ARD (\texttt{RBF-BO}), three random linear embedding methods: \texttt{REMBO}~\citep{Wang2016}, \texttt{HeSBO}~\citep{Nayebi19a}, \texttt{ALEBO}~\citep{Letham2020}, one learning linear embedding method: \texttt{SILBO}~\citep{Chen2020}, and a method without the linear embedding assumption: \texttt{TuRBO}~\citep{Eriksson2019}.
Since random embedding methods and TuRBO have special initialization strategies, initial points differed from those in other methods.
On batch problems, we compared our methods with the DPP-based BO method~\citep{Kathuria2016} with RBF kernel (\texttt{RBF-BO}), and \texttt{TuRBO}.

\begin{figure*}[ht]
\centering
\includegraphics[width=15cm]{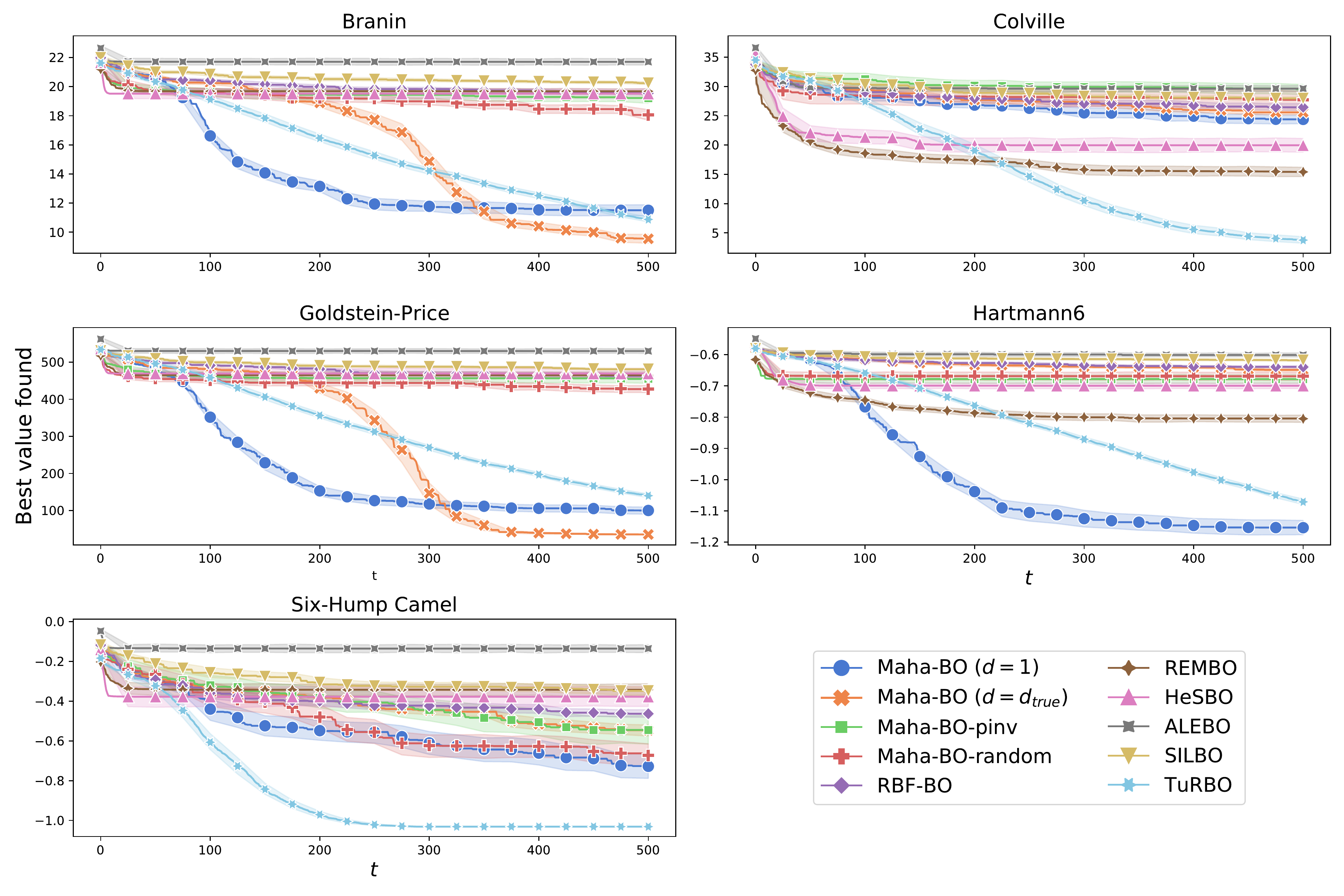}
\caption{Performance of sequential optimization ($N_{\mathrm{batch}}=1$) on five benchmark functions embedded in $D=1000$. The best value by each iteration is plotted with mean and 68\% confidence interval.}
\label{fig:benchmark_B1}
\end{figure*}

\begin{figure*}[ht]
\centering
\includegraphics[width=15cm]{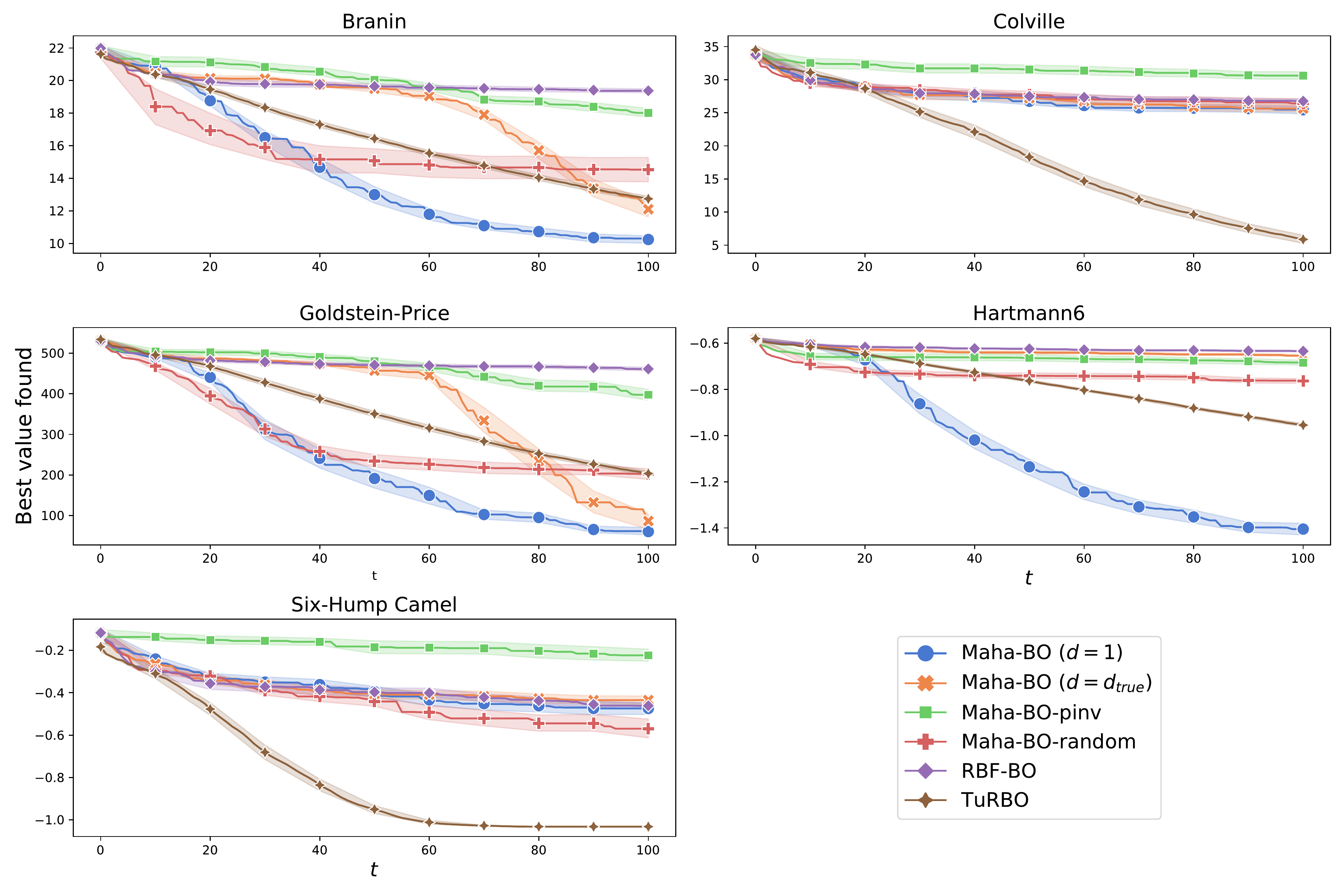}
\caption{Performance of batch optimization ($N_{\mathrm{batch}}=5$) on five benchmark functions embedded in $D=1000$. The best value by each iteration is plotted with mean and 68\% confidence interval.}
\label{fig:benchmark_B5}
\end{figure*}

The result is shown in Figure~\ref{fig:benchmark_B1} and \ref{fig:benchmark_B5}.
The one-step methods (\texttt{Maha-BO} ($d=1$) or \texttt{Maha-BO} ($d=d_\mathrm{true}$)) outperformed the other methods on three of five benchmarks in both sequential and batch optimization.
The two-step methods (\texttt{Maha-BO-pinv} and \texttt{Maha-BO-random}) perform poorly than the one-step methods. It demonstrates the significance of unbiased exploration as discussed in Section~\ref{sec:challenges}.

\texttt{ALEBO} and \texttt{SILBO} performed poorly in this experiment.
One possible reason is the difference in the true embedding matrix $A$ assumption. Previous studies often considered axis-aligned embedding, i.e., each column vector of $A$ has a value at only one dimension, or orthogonal embedding, i.e., column vectors are orthogonal to each other~\citep{Djolonga2013,Zhang2019,Chen2020,Letham2020}.
Their methods focus on the identification of effective subspace with an orthogonal basis. On the other hand, we do not impose any structural assumptions in learning the embedding matrix. Our method successfully extracted the non-orthogonal features from high-dimensional observations.

\texttt{TuRBO}, which does not use the linear embedding assumption, performed significantly better on Colville and Six-Hump Camel functions in both sequential and batch optimization settings. 
On these functions, all the methods based on learning the linear embedding showed similar poor performance. The methods based on the random embedding (\texttt{REMBO} and \texttt{HeSBO}) showed better performance, especially on Colville.
Thus, the result implies that these functions are intrinsically challenging to learn the low-dimensional structure from data.

Finally, we note that the performance of our proposed one-step method \texttt{Maha-BO} depends on the embedding dimensionality $d$. On Branin and Goldstein-Price functions, the method with $d=d_{\mathrm{true}}$ found a better solution than the method with $d=1$ in the sequential setting. However, the method with $d=1$ showed better performance in the other cases.

It contrasts the common assumption $d \geq d_{\mathrm{true}}$ on the random embedding methods. 
The assumption is required to increase the probability of the true optimizer falling into the search space $h(\mathcal{X})$~\citep{Wang2016}. As discussed in Section~\ref{sec:choice_of_embedding}, the learning embedding methods like ours do not rely on the probabilistic argument.

\begin{figure*}[ht]
\centering
\includegraphics[width=15cm]{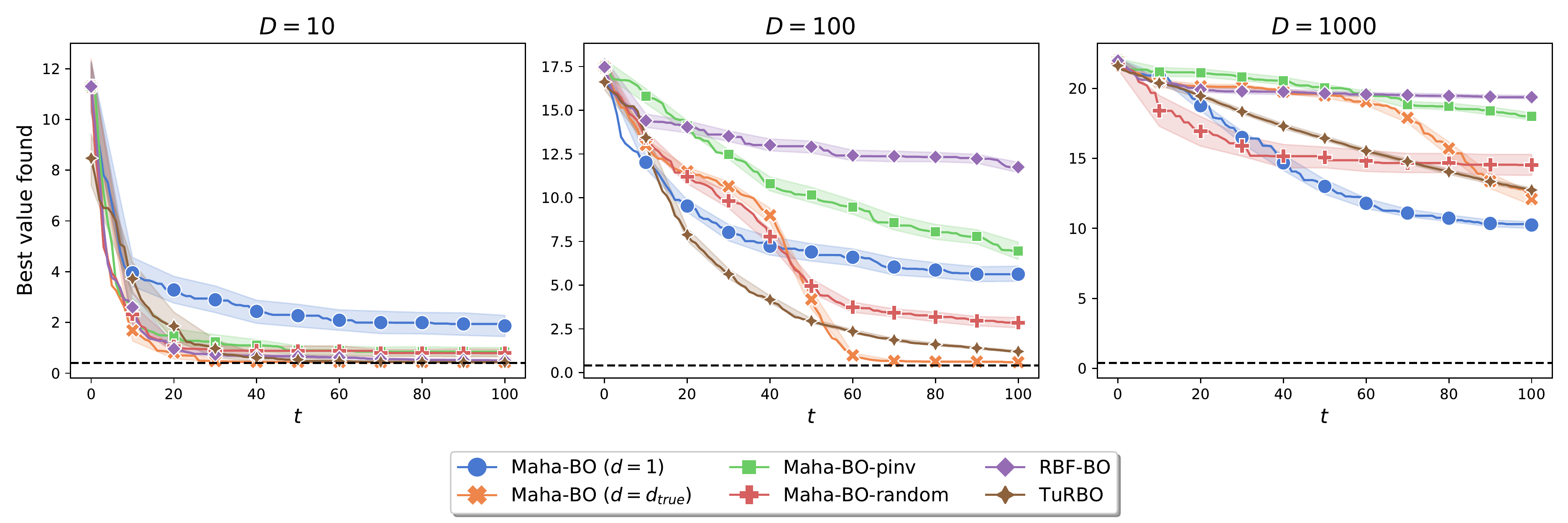}
\caption{Performance of batch optimization ($N_{\mathrm{batch}}=5$) on Branin functions embedded in $D=10, 100, 1000$. The best value by each iteration is plotted with mean and 68\% confidence interval. The dashed horizontal line indicates the minimum of the original Branin function.}
\label{fig:branin_d10_1000_B5}
\end{figure*}

One possible factor determining the best embedding dimension $d$ for our method is the difficulty of the optimization problem.
To investigate that point, we further examined the batch optimization of the Branin function varying $D$ from 10 to 1000.
The result is shown in Figure~\ref{fig:branin_d10_1000_B5}. As is evident from the final best value found, smaller $D$ makes the optimization problem easier.
When the problem is easy ($D=10$), most of the methods found the true optimum within the budget except for \texttt{Maha-BO} with $d=1$.
In contrast, \texttt{Maha-BO} with $d=1$ performed best for the difficult problem ($D=1000$).
The same method with $d=d_{\mathrm{true}} = 2$ begins to find good solutions later than the method with $d=1$. While it could surpass the method with $d=1$ and find the true optimal within the budget when $D=10,100$, it was too slow and could not exceed within the budget when $D=1000$.
Therefore, the result implies that the method with $d=1$ could quickly find an effective direction (one-dimensional space), but it is dominated by other methods when one more effective direction (two-dimensional space) comes into focus.

\subsection{Rover trajectory optimization}\label{sec:rover}
We also conducted more realistic experiments on rover trajectory optimization problem~\citep{Wang2018}. The goal is to optimize the location of 30 waypoints in 2D so that a rover can navigate from a start position to a goal position without colliding with obstacles.

First, we performed sequential and batch optimization experiments on this 60-dimensional function. From 10 initial points, we repeat $T=500$ rounds of evaluation 20 times.
We compare our method \texttt{Maha-BO} ($d=1, 2$) with the same existing methods as in the previous experiments. 
The results are shown in Figure~\ref{fig:rover_B1} and \ref{fig:rover_B5}~(Left). Since this function does not have an explicit linear embedding structure, it was not surprising that embedding-based methods performed poorly. \texttt{TuRBO} enjoyed the benefit of local modeling. Nevertheless, it is noteworthy that our methods with Mahalanobis kernel outperformed the RBF-kernel-based counterpart \texttt{RBF-BO}. It implies the existence of an intrinsic linear embedding structure of the function.

\begin{figure*}[ht]
\centering
\includegraphics[width=15cm]{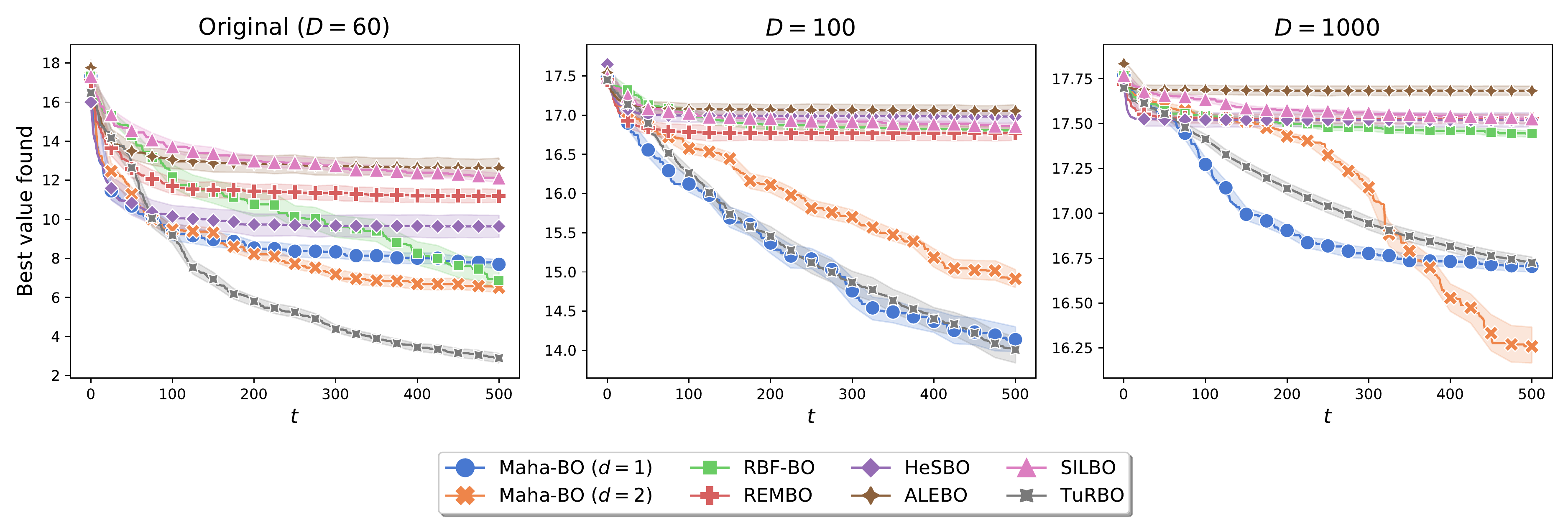}

\caption{Performance of sequential optimization ($N_{\mathrm{batch}}=1$) on the rover trajectory function (original $D=60$, embedded in $D=100$, and embedded in $D=1000$). The best value by each iteration is plotted with mean and 68\% confidence interval.}
\label{fig:rover_B1}
\end{figure*}

\begin{figure*}[ht]
\centering
\includegraphics[width=15cm]{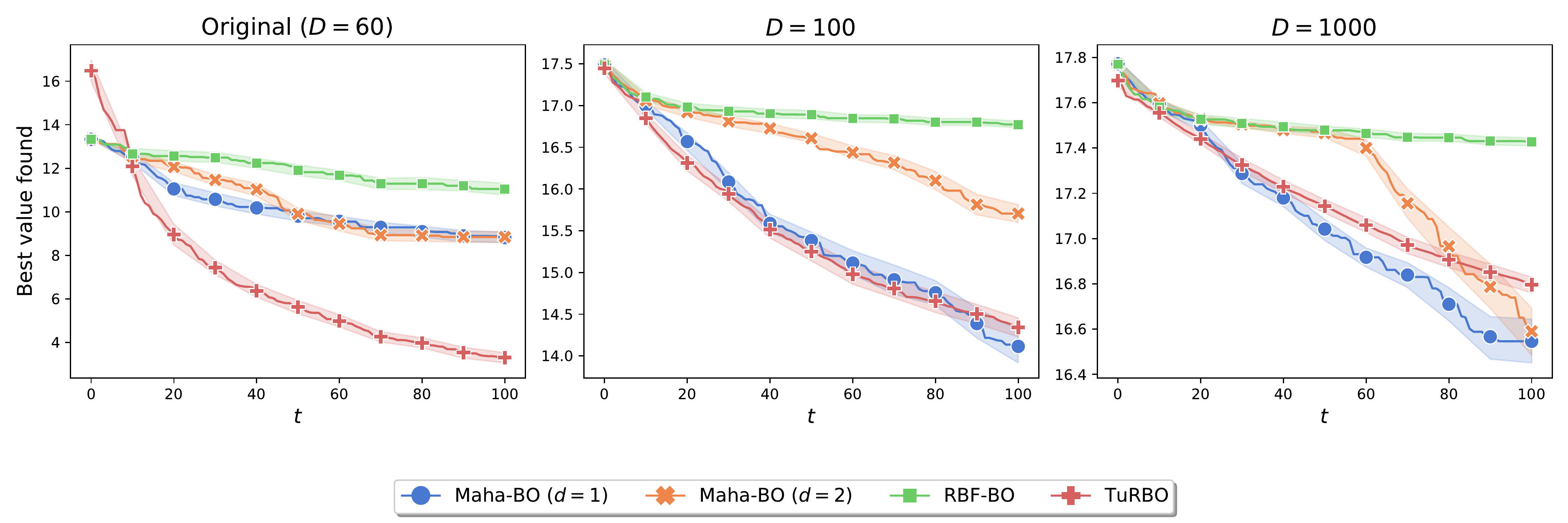}
\caption{Performance of batch optimization ($N_{\mathrm{batch}}=5$) on the rover trajectory function (original D = 60, embedded in D = 100, and embedded in D = 1000). The best value by each iteration is plotted with mean and 68\% confidence interval.}
\label{fig:rover_B5}
\end{figure*}

Additionally, as we did for the benchmark functions, we performed sequential and batch optimization experiments on the linearly projected version of this function. The results for $D=100$ and $1000$ are shown in Figure~\ref{fig:rover_B1} and \ref{fig:rover_B5}~(Middle, Right). Our methods outperformed the other methods in these situations.
As in the result for the Branin function, our method with embedding dimensionality $d=1$ quickly finds better solutions than $d=2$ in the early stages, whereas the method with $d=2$ could eventually find much better solutions in the $D=1000$ sequential setting.

\section{Conclusion}

We have considered the batch and high-dimensional optimization problem with linear embedding assumption. Our theoretical and empirical results show that the two-step method in high-dimensional Bayesian optimization—a step to select queries in a low-dimensional space followed by a step to reconstruct queries in the original high-dimensional space—has several drawbacks in terms of exploration. We have proposed a one-step method with Mahalanobis kernel and Determinantal point processes. It directly finds query points in the original high-dimensional space exploiting the low-dimensional structure. We have empirically verified that our method performs excellently on high-dimensional problems when the linear embedding assumption is satisfied.

Our method with one-dimensional embedding showed excellent performance in the early stage for most of the problems. Thus it is appropriate for cases where the quick finding of a suboptimal solution matters. However, as the number of available observations increases, the methods with higher embedding dimensions become superior. One future direction is to investigate the method by adaptively changing the embedding dimensionality.

In this paper, we have considered only the linear embedding assumption. Similar considerations should apply to the nonlinear embedding assumption, e.g., embedding by VAE~\citep{Moriconi2020}.
While the inverse of a linear mapping is just a linear subspace, the inverse image of a nonlinear mapping is difficult to handle. In~\cite{Moriconi2020}, the reconstruction mapping is learned by fitting the data. We believe that the reconstruction mapping should be re-examined in the light of unbiased and efficient exploration.


\acks{The research of the first author is supported by Epistra Inc.}


\newpage

\appendix

\section{Additional details of the experiment}
Here we provide additional information about the experiments. See the source code at \url{https://github.com/s-horiguchi/MahalanobisBatchBO} for exact implementation.

\subsection{Acquisition optimization}
In the one-step method, we ran L-BFGS-B from $m=5$ initial conditions. 
For sampling initial conditions, we use the heuristics that are implemented in BoTorch~\citep{balandat2020botorch}.

In the randomized two-step method (\texttt{Maha-BO-random}), we use the Algorithm~\ref{alg:random_reconstruction} in the main text. However, in some cases, sampling a good initial condition in the reconstruction step did not succeed in a reasonable time. Thus, we fell back to the pseudo-inverse mapping if the gradient descent could not find a convergent point in $\mathcal{X}$ after trying 100 initial conditions.

\subsection{DPP sampling}
We made a few modifications to the DPP sampling part from the DPP-EST-SAMPLE algorithm~\citep{Kathuria2016}.

First, previous studies only considered the discretized domain $\mathcal{X}$~\citep{Kathuria2016,Wang2017} since DPP sampling had been difficult over a continuous domain. However, in high-dimensional continuous domain problems, the usual lattice discretization becomes intractable since the size of the discretized domain grows exponentially with the dimension. We adopt a recently developed continuous $k$-DPP sampler by~\cite{pmlr-v97-rezaei19a}, which is also efficient in the high-dimensional continuous domain.

Second, we adopt a slightly different kernel for the $k$-DPP
\begin{equation*}
	L_t(x, x^\prime) = \kappa_{t,1}(x,x^\prime).
\end{equation*}
This kernel corresponds to the small noise limit $\sigma \rightarrow 0$ of the original kernel. The modification allows a more straightforward theoretical analysis.

Third, the relevant region is generalized as follows
\begin{equation*}
\mathcal{R}_t = \left\{ x\in\mathcal{X} ~\big\vert~ \alpha_t^{(-\lambda\beta_t)}(x) \leq \min_{x^\prime\in\mathcal{X}}\alpha_t^{(\beta_t)}(x^\prime) \right\}.
\end{equation*}
The parameter $\lambda \geq 1$ determines the size of the relevant region, which has been typically set to $2$~\citep{Contal2013,Kathuria2016}. In $k$-DPP sampling, the method by~\cite{pmlr-v97-rezaei19a} requires many uniform samples from the ground set, the relevant region. We use rejection sampling here. If the relevant region is too small, it is difficult to obtain enough samples. We gradually increase $\lambda$ from $2$ so that the rejection sampling would terminate after a finite number of steps.

\subsection{Other methods}
We employed the implementation on the Ax platform\footnote{\url{https://ax.dev/}} for \texttt{REMBO}, \texttt{HeSBO} and \texttt{ALEBO}, SILBO-TD implementation by the authors\footnote{\url{https://github.com/cjfcsjt/SILBO}}, and TuRBO\footnote{\url{https://github.com/uber-research/TuRBO}}.
Embedding dimension was $d=d_\mathrm{true}$ for \texttt{REMBO} and \texttt{SILBO} and $d=2d_\mathrm{true}$ for \texttt{HeSBO} and \texttt{ALEBO}. For \texttt{TuRBO}, the number of trust regions was set to $m=5$, and the number of initial points for each trust region was 10.
We implemented these methods with GPyTorch~\citep{gardner2018gpytorch}.

\subsection{High-dimensional objective functions}
Given a function $\tilde{f}$ on $d_\mathrm{true}$ dimensional box-shaped $\mathcal{Z}$ and arbitrary dimensionality $D > d_\mathrm{true}$, we constructed a high-dimensional objective function $f$ whose domain is a $D$ dimensional box $\mathcal{X}$ in the following way. 

First of all, without loss of generality, we can assume a normalized box $\mathcal{Z} = [-1, +1]^{d_\mathrm{true}}$. Let $\tilde{A} \in \mathbb{R}^{d_\mathrm{true} \times D}$, whose elements are sampled from i.i.d. standard Gaussian distributions. 
Then, we calculate $A \in \mathbb{R}^{d_\mathrm{true} \times D}$ as follows:
\begin{equation*}
	A_{ij} = \frac{\tilde{A}_{ij}}{\sum_{k=1}^D |\tilde{A}_{ik}|}.
\end{equation*}
Finally, we set the domain $\mathcal{X} = [-1, +1]^D$. The function $f$ is evaluated as $f(x) = \tilde{f}(Ax)$.

Using this $A$, $\tilde{f}$ would be evaluated only on $\mathcal{Z}$ not on the outside of $\mathcal{Z}$ as long as the input $x$ is in $\mathcal{X}$. To show that, it is sufficient to prove that for any $x\in\mathcal{X}= [-1, +1]^D$, $z = Ax$ is a member of $\mathcal{Z}$. It follows from a simple calculation
\begin{equation*}
\begin{aligned}
	\underset{x\in \mathcal{X}}{\max}~z_{i} 
	&=\sum _{j=1}^{D}\max( A_{ij} ,\ -A_{ij}) =\sum _{j=1}^{D}| A_{ij}| = 1,\\
	\underset{x\in \mathcal{X}}{\min}~z_{i} 
	&=\sum _{j=1}^{D}\min( A_{ij} ,\ -A_{ij}) =-\sum _{j=1}^{D}| A_{ij}| =-1.	
\end{aligned}
\end{equation*}

Note that the constructed function $f$ has larger minimum value $ \min_{x \in \mathcal{X}}~f(x) \geq \min_{z\in\mathcal{Z}} \tilde{f}(z)$. 
The original optimum in $\mathcal{Z}$ could have no corresponding points in $\mathcal{X}$ since the image of $\mathcal{X}$ by $g(x)=Ax$ is a subset of $\mathcal{Z}$.

\subsection{Detailed results on the experiments}
We show the best values after $T$ rounds of optimization for five benchmark functions are shown in Table~\ref{tab:benchmark_B1_5}.

\begin{landscape}
\begin{table*}[ht]
\centering
{\small
\begin{tabular}{lrrrrr}
\toprule
\multicolumn{1}{l}{} & \multicolumn{5}{c}{$N_{\mathrm{batch}}=1$}\\
\multicolumn{1}{l}{Benchmark} & \multicolumn{1}{c}{Branin} & \multicolumn{1}{c}{Colville} & \multicolumn{1}{c}{Goldstein-Price} & \multicolumn{1}{c}{Hartmann6} & \multicolumn{1}{c}{Six-Hump} \\
\midrule
\texttt{Maha-BO} ($d=1$)  &11.50 $\pm$ 0.38  &  24.39 $\pm$ 0.79  & 100.16 $\pm~~$ 9.78  & \textbf{-1.15 $\pm$ 0.02}  & -0.73 $\pm$ 0.06  \\
\texttt{Maha-BO} ($d=d_\mathrm{true}$)  &\textbf{9.55 $\pm$ 0.30}  &  25.59 $\pm$ 0.45  & \textbf{35.57 $\pm~~$ 1.41}  & -0.65 $\pm$ 0.00  & -0.55 $\pm$ 0.03  \\
\texttt{Maha-BO-pinv}  &19.19 $\pm$ 0.34  &  29.50 $\pm$ 0.89  & 454.79 $\pm$ 12.74  & -0.68 $\pm$ 0.01  & -0.54 $\pm$ 0.07  \\
\texttt{Maha-BO-random}  &18.06 $\pm$ 0.37  &  27.75 $\pm$ 1.54  & 427.24 $\pm~~$ 9.52  & -0.67 $\pm$ 0.01  & -0.67 $\pm$ 0.06  \\
\texttt{RBF-BO}  &19.54 $\pm$ 0.11  &  26.45 $\pm$ 0.60  & 461.91 $\pm~~$ 5.50  & -0.64 $\pm$ 0.00  & -0.46 $\pm$ 0.02  \\
\texttt{REMBO}  &19.68 $\pm$ 0.18  &  15.41 $\pm$ 0.85  & 464.89 $\pm~~$ 9.44  & -0.80 $\pm$ 0.01  & -0.34 $\pm$ 0.03  \\
\texttt{HeSBO}  &19.52 $\pm$ 0.34  &  19.97 $\pm$ 1.23  & 469.22 $\pm$ 13.86  & -0.70 $\pm$ 0.01  & -0.38 $\pm$ 0.05  \\
\texttt{ALEBO}  &21.70 $\pm$ 0.23  &  29.64 $\pm$ 0.53  & 529.92 $\pm~~$ 6.43  & -0.60 $\pm$ 0.01  & -0.14 $\pm$ 0.02  \\
\texttt{SILBO}  &20.24 $\pm$ 0.10  &  27.96 $\pm$ 0.33  & 480.90 $\pm~~$ 2.63  & -0.62 $\pm$ 0.00  & -0.35 $\pm$ 0.02  \\
\texttt{TuRBO}  &10.88 $\pm$ 0.18  &  \textbf{3.76 $\pm$ 0.58}  & 139.98 $\pm~~$ 5.07  & -1.07 $\pm$ 0.01  & \textbf{-1.03 $\pm$ 0.00}  \\
\midrule
\multicolumn{1}{l}{} & \multicolumn{5}{c}{$N_{\mathrm{batch}}=5$}\\
\midrule
\texttt{Maha-BO} ($d=1$)  &\textbf{10.24 $\pm$ 0.24}  &  25.45 $\pm$ 0.57  & \textbf{60.29 $\pm~~$ 8.92} & \textbf{-1.41 $\pm$ 0.03}  & -0.47 $\pm$ 0.03  \\
\texttt{Maha-BO} ($d=d_\mathrm{true}$)  &12.10 $\pm$ 0.48  &  25.63 $\pm$ 0.53  & 86.34 $\pm$ 20.56  & -0.65 $\pm$ 0.00  & -0.43 $\pm$ 0.02  \\
\texttt{Maha-BO-pinv}  &18.02 $\pm$ 0.27  &  30.62 $\pm$ 0.59  & 397.62 $\pm$ 13.22  & -0.68 $\pm$ 0.01  & -0.22 $\pm$ 0.03  \\
\texttt{Maha-BO-random}  &14.53 $\pm$ 0.74  &  26.41 $\pm$ 0.69  & 202.17 $\pm$ 14.18  & -0.76 $\pm$ 0.01  & -0.57 $\pm$ 0.04  \\
\texttt{RBF-BO}  &19.37 $\pm$ 0.12  &  26.75 $\pm$ 0.34  & 461.26 $\pm~~$ 4.41  & -0.63 $\pm$ 0.00  & -0.46 $\pm$ 0.01  \\
\texttt{TuRBO}  &12.74 $\pm$ 0.16  &  \textbf{5.86 $\pm$ 0.62}  & 203.47 $\pm~~$ 5.28  & -0.95 $\pm$ 0.01  & \textbf{-1.03 $\pm$ 0.00}  \\
\bottomrule
\end{tabular}
}
\caption{Performance on five benchmark functions embedded in $D=1000$.
The mean and standard error of the best values are shown.}
\label{tab:benchmark_B1_5}
\end{table*}
\end{landscape}

\subsection{Running time}

We calculated the average running time of each method against the Branin problem with $D=10,100, 1000$ and summarized it in Table~\ref{tab:benchmark_time}. The average running of time of \texttt{Maha-BO} ($d=1$ or $d_{\mathrm{true}}$) was smaller than or comparable with that of \texttt{RBF-DPP}. Our method was slower than \texttt{REMBO}, \texttt{HeSBO}, \texttt{SILBO}, and \texttt{TuRBO}, but faster than \texttt{ALEBO} in this experiment. Also, the running time did not grow exponentially with respect to the dimensionality $D$.

\begin{table*}[ht]
\centering
{\small
\begin{tabular}{lrrr}
\toprule
\multicolumn{1}{l}{} & \multicolumn{3}{c}{$N_{\mathrm{batch}}=1$}\\
\multicolumn{1}{l}{Benchmark} & \multicolumn{1}{c}{Branin $D=10$} & \multicolumn{1}{c}{Branin $D=100$} & \multicolumn{1}{c}{Branin $D=1000$}\\
\midrule
\texttt{Maha-BO} ($d=1$)  &51.17 $\pm~~$ 4.14  &  110.65 $\pm~~$ 0.91  & 280.23 $\pm~~$ 4.32  \\
\texttt{Maha-BO} ($d=d_{\mathrm{true}}$)  &120.72 $\pm~~$ 1.35  &  116.58 $\pm~~$ 0.62  & 310.47 $\pm~~$ 2.90  \\
\texttt{Maha-BO-pinv}  &89.30 $\pm~~$ 1.46  &  108.85 $\pm~~$ 1.00  & 259.81 $\pm~~$ 3.71  \\
\texttt{Maha-BO-random}  &127.43 $\pm~~$ 4.09  &  113.49 $\pm~~$ 1.28  & 211.69 $\pm~~$ 10.92  \\
\texttt{RBF-BO}  &747.88 $\pm$ 50.03  &  266.06 $\pm~~$ 1.03  & 388.75 $\pm~~$ 1.09  \\
\texttt{REMBO}  &7.98 $\pm~~$ 0.32  &  9.95 $\pm~~$ 0.19  & 42.44 $\pm~~$ 0.36  \\
\texttt{HeSBO}  &23.62 $\pm~~$ 2.45  &  29.09 $\pm~~$ 1.40  & 108.35 $\pm~~$ 3.58  \\
\texttt{ALEBO}  &314.60 $\pm$ 52.00  &  319.58 $\pm$ 58.11  & 464.65 $\pm$ 34.23  \\
\texttt{SILBO}  &7.80 $\pm~~$ 0.51  &  12.16 $\pm~~$ 0.14  & 55.54 $\pm~~$ 0.70  \\
\texttt{TuRBO}  &2.69 $\pm~~$ 0.03  &  19.35 $\pm~~$ 0.34  & 625.77 $\pm$ 14.95  \\
\midrule
\multicolumn{1}{l}{} & \multicolumn{3}{c}{$N_{\mathrm{batch}}=5$}\\
\midrule
\texttt{Maha-BO} ($d=1$)  &32.58 $\pm~~$ 2.74  &  172.02 $\pm$ 14.04  & 204.28 $\pm$ 2.06  \\
\texttt{Maha-BO} ($d=d_{\mathrm{true}}$)  &780.87 $\pm$ 32.57  &  183.42 $\pm$ 12.61  & 346.49 $\pm$ 1.57  \\
\texttt{Maha-BO-pinv}  &1577.94 $\pm$ 78.21  &  37.08 $\pm~~$ 2.51  & 285.20 $\pm$ 2.03  \\
\texttt{Maha-BO-random}  &754.72 $\pm$ 25.49  &  132.06 $\pm~~$ 8.89  & 353.83 $\pm$ 9.19  \\
\texttt{RBF-BO}  &381.11 $\pm$ 18.27  &  99.00 $\pm~~$ 5.02  & 259.14 $\pm$ 0.68  \\
\texttt{TuRBO}  &1.54 $\pm~~$ 0.04  &  88.98 $\pm~~$ 1.42  & 108.57 $\pm$ 3.99  \\
\bottomrule
\end{tabular}
}
\caption{Running time per trial in sequential and batch optimization of Branin function. $T=500$, $D=10,100,1000$. Means and standard errors of the times in minutes are shown.}
\label{tab:benchmark_time}
\end{table*}

\vskip 0.2in
\bibliography{jmlr2022}

\end{document}